\def\BibTeX{{\rm B\kern-.05em{\sc i\kern-.025em b}\kern-.08em
    T\kern-.1667em\lower.7ex\hbox{E}\kern-.125emX}}
\newtheorem{assumption}{Assumption}
\newtheorem{problem}{Problem}
\newcommand*{\st}{\mathrm{s.t.}}
\newcommand*{\mbb}[1]{\ensuremath{\mathbb{#1}}}
\newcommand*{\mcal}[1]{\ensuremath{\mathcal{#1}}}
\newcommand*{\mbf}[1]{\ensuremath{\mathbf{#1}}}
\newcommand*{\appendixnote}[1]{Appendix~\ref{#1}}
\title{Safe, Out-of-Distribution-Adaptive MPC with Conformalized \ \\ Neural Network Ensembles}
\begin{document}
    \coltauthor{\Name{Jose Leopoldo Contreras} \Email{jcontr83@stanford.edu}\\
                \Name{Ola Shorinwa} \Email{shorinwa@stanford.edu}\\
                \Name{Mac Schwager} \Email{schwager@stanford.edu}\\
                \addr Department of Aeronautics and Astronautics,
        Stanford University, CA, USA
        }

    \maketitle

    \begin{abstract}
        We present SODA-MPC, a Safe, Out-of-Distribution-Adaptive Model Predictive Control algorithm that uses an ensemble of learned models for prediction with a runtime monitor to flag unreliable out-of-distribution (OOD) predictions. When an OOD situation is detected, SODA-MPC triggers a safe fallback control strategy based on reachability, producing a control framework that achieves the high performance of learning-based models while preserving the safety of reachability-based control. We demonstrate the method in the context of an autonomous vehicle, driving among dynamic pedestrians, where SODA-MPC uses a neural network ensemble for pedestrian prediction. We use the maximum singular value of the empirical covariance among the ensemble as the OOD signal for the runtime monitor. We calibrate this signal using conformal prediction to derive an OOD detector with probabilistic guarantees on the false-positive rate, given a user-specified confidence level. During in-distribution operation, the MPC controller avoids collisions with a pedestrian based on the trajectory predicted by the mean of the ensemble. When OOD conditions are detected, the MPC switches to a reachability-based controller to avoid collisions with the reachable set of the pedestrian assuming a maximum pedestrian speed, to guarantee safety under the worst-case actions of the pedestrian. %
        We verify SODA-MPC in extensive autonomous driving simulations in a pedestrian-crossing scenario. Our model ensemble is trained and calibrated with real pedestrian data, showing that our OOD detector obtains the desired accuracy rate within a theoretically-predicted range. We empirically show improved safety and task completion compared with two state-of-the-art MPC methods that also use conformal prediction but without OOD adaptation. Further, we demonstrate the effectiveness of our method with the large-scale multi-agent predictor Trajectron++, using large-scale traffic data from the nuScenes dataset for training and calibration.
    \end{abstract}

    \begin{keywords}
        Conformal prediction, out-of-distribution (OOD) detection, and ensemble learning.
    \end{keywords}
    
    \section{Introduction}
    Robotic autonomy stacks often leverage learning-based models for perception, trajectory prediction, and control. However, in many situations, the performance of these models depends highly on the distribution of the input data encountered at runtime. Learned models can exhibit strong performance when the runtime input data is similarly distributed to the training data (the \emph{in-distribution} setting), but performance suffers when the runtime inputs are significantly different from the training data (the \emph{out-of-distribution}, or OOD, setting\footnote{We specifically consider the OOD situation known as covariate shift, where the marginal distribution of the input data changes between training and runtime. In our problem, this arises from observed agents expressing unusual behavior not well-represented in the training data.}) \citep{nguyen2015deep}. This deterioration in performance in OOD settings has limited the utilization of deep-learning models in safety-critical applications. In this work, we introduce a \emph{Safe OOD-Adaptive} Model Predictive Controller (SODA-MPC), which enables robots to operate safely while sharing their task space with other agents that exhibit both in-distribution and OOD behavior.
    
    \begin{figure}[ht]
        \centering
        \includegraphics[width=0.48\linewidth]{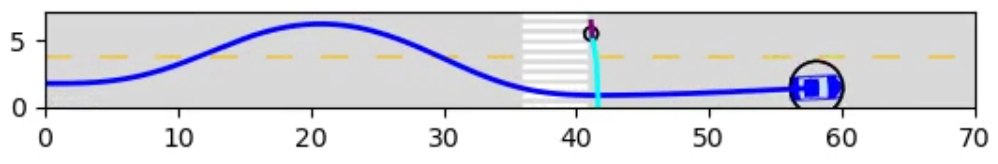}
        \includegraphics[width=0.48\linewidth]{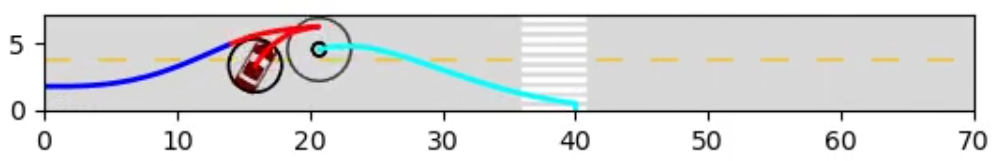}
        \caption{The \mbox{SODA-MPC} algorithm avoids pedestrian collisions in both the nominal (left) and adversarial (right) behavior settings. The algorithm correctly detects the adversarial pedestrian behavior as out-of-distribution, and switches its control strategy to the safe reachable-set-based MPC.}
        \label{fig:SAM_CP}
    \end{figure}

    We consider a standard autonomous vehicle (AV) control scheme, which uses an MPC with collision constraints to avoid other agents in the scene. The future trajectories of these agents (required for the collision avoidance constraint in the MPC) are predicted with a learned trajectory prediction model. We address a critical flaw in this architecture: such trajectory predictions are only reliable in in-distribution regimes. Our SODA-MPC algorithm alleviates this problem by introducing a statistically calibrated runtime monitor to detect OOD situations online and switch to a safe fallback controller in those cases. Our runtime monitor uses a neural network ensemble for OOD detection. However, we note that widely-used measures of disagreement for ensemble models are not statistically calibrated signals---they are not tied to an explicit probability of the data being in or out of distribution. Hence, we use conformal prediction to calibrate an OOD detector for the ensemble. When the detector flags an OOD scenario, the MPC ignores the neural network prediction and avoids colliding with the forward reachable sets of the agents, which is a guaranteed-safe fallback controller. The reachable sets are computed assuming a maximum speed bound for the agents.

    We demonstrate SODA-MPC in a scenario where an AV must avoid colliding with a pedestrian crossing the road, illustrated in Figure \ref{fig:SAM_CP}. We use a trajectory prediction model ensemble trained from real pedestrian data \citep{yang2019top}. To demonstrate the safety guarantees provided by our controller, we use a handful of held-out trajectories of the original data (which we refer to as \emph{nominal}), and create modified instances of these so that the pedestrian exhibits OOD behavior by running head-on towards the approaching car, a worst-case scenario for the AV.
    The car must avoid colliding with the pedestrian in both nominal and OOD regimes while attempting to reach a goal position on the road. In our evaluations, we find that our method reaches the goal position in a majority of cases and reliably detects OOD behavior such that it avoids all pedestrian collisions across all tests. We compare against two recent baselines that also use conformal prediction for AV control with learned pedestrian prediction models: SPDE \citep{lindemann2023safe}, which calibrates confidence intervals for the learned model, and MARC \citep{muthali2023multi}, which calibrates reachable sets for the learned models. Neither method directly addresses OOD behavior. We find that our algorithm is safer than SPDE in OOD regimes and more efficient than MARC which tends to be overly conservative in in-distribution regimes, as demonstrated in  Figure~\ref{fig:sim_results}. In addition, we demonstrate the success of our detector in identifying OOD scenarios when applied to the large-scale trajectory predictor Trajectron++ \citep{salzmann2020trajectron++}.
    
    Our contributions are as follows:
    \begin{itemize}
        \item We derive an MPC architecture, SODA-MPC, for learned predictive models that safely adapts to out-of-distribution model predictions. Our MPC switches to a guaranteed-safe reachable set based prediction in out-of-distribution regimes.
    
        \item In addition, we introduce an OOD runtime monitor with a guaranteed error rate using neural network ensembles. The monitor is statistically calibrated with conformal prediction, supplying the switching signal for our SODA-MPC.
        
    \end{itemize}

    \section{Related Works}
    \subsection{Out-of-Distribution Detection in Neural Networks}
    Methods for OOD detection can be broadly grouped into training-free and fine-tuning methods. Training-free methods directly leverage the outputs \citep{liu2020energy, wang2021can, hendrycks2016baseline}, e.g., softmax probabilities, and activation patterns \citep{sun2021react, dong2022neural, sun2022dice, sharma2021sketching} of a trained model to detect OOD inputs.
    Fine-tuning methods modify the underlying architecture of the model, including the loss function, to estimate the network's confidence in its predictions to distinguish between in-distribution and out-of-distribution settings, e.g., \citep{devries2018learning, lee2017training, hein2019relu, thulasidasan2019mixup, madras2019detecting}. We refer interested readers to \citep{yang2021generalized} for a more detailed discussion. In the context of robotics, \cite{sinhaRSS24LLM_OOD} detects visual anomalies with an LLM, and enacts a safe fallback MPC if a danger is detected. Although these existing methods work well in many situations, they do not provide provable OOD detection probabilistic guarantees, in general.

    \subsection{Conformal Prediction in Trajectory Prediction and Motion Planning}
    Conformal prediction \citep{vovk2005algorithmic, shafer2008tutorial} provides a black-box, distribution-free method for generating prediction regions from online observations, making it ideal for wide-ranging applications, e.g., in automated decision support systems \citep{straitouri2023improving, babbar2022utility} and in  safe trajectory prediction and motion planning \citep{luo2022sample, lindemann2023safe, dixit2023adaptive, lekeufack2023conformal, sun2023conformal}.
    In trajectory prediction, some existing methods leverage conformal prediction for the detection of unsafe situations in early-warning systems \citep{luo2022sample}, verification of autonomous systems \citep{fan2020statistical, dietterich2022conformal}, and motion planning \citep{chen2021reactive, lindemann2023safe, muthali2023multi}. In our work, we utilize conformal prediction to provide probabilistic guarantees on the safety of a trajectory prediction module and model predictive controller for autonomous navigation of an agent. Similar to out work, \citep{lindemann2023safe} and \citep{muthali2023multi} apply conformal prediction to learned trajectory prediction models to obtain in-distribution safety guarantees for an MPC. In contrast, we use conformal prediction and a safe fallback controller to obtain safety for both in-distribution and OOD settings.

    \section{Problem Formulation}
    We consider an autonomous robot operating in an environment with other agents, where the dynamics model of the robot is given by: ${\mathbf{x}_{t+1} = f_e(\mathbf{x}_t,\mathbf{u}_t),}$
    where $\mathbf{x}_t \in \mathcal{X} \subseteq \mathbb{R}^{n_e}$ represents the state of the robot at time step $t$, $\mathcal{X}$ represents its workspace, $\mathbf{x}_0 \in \mathcal{X}$ represents its initial state, ${\mathbf{u}_t \in \mathcal{U} \subseteq \mathbb{R}^{m}}$ denotes its control input at $t$, where $\mathcal{U}$ represents the set of permissible control inputs, and ${f_e: \mathbb{R}^{n_e} \times \mathbb{R}^{m} \to \mathbb{R}^{n_e}}$ describes the robot's dynamics, which are presumed known.
    
    The robot seeks to navigate to a goal location optimally without colliding with other agents in its environment. Our method can handle multiple or single agents. We do not assume that these dynamic agents cooperate with the robot, allowing for potential adversarial behaviors.
    Let the random variable $Y$ denote the joint state of multiple agents, which follows an unknown stationary distribution $\mathcal{D}$ over the states of agents, i.e., ${Y \sim \mathcal{D}}$, where ${Y_t \in \mathbb{R}^{n_a}}$ denotes the state of the agents at time $t$, and $n_a$ is the stacked state dimension of all the agents.
    We use $y$ when referring to an observation of one of these multi-agent states, with $y_t$ referring to the agents' state at time $t$. The true dynamics of these agents are unknown, but can be approximated with a learned model $f_a: \mathbb{R}^{n_a} \times \mathbb{R}^m \to \mathbb{R}^{n_a}$, learned from a dataset $D$ where $D := \{y^{(1)},\dots,y^{(k)}\}$, and $y^{(i)}$ represents the $i$th complete observed joint state of the agents in the environment.
    
    \begin{assumption}
        We have access to $k$ independent realizations $y$ of the distribution $\mathcal{D}$, collected in the dataset ${D:=\{y^{(1)},\dots,y^{(k)}\}}$.
    \end{assumption}
    
    This assumption is not limiting in practice, but is essential for the theoretical guarantees provided by our method. 
    Now, we provide a formal statement of the problem.
    
    \begin{problem}[Safe Adaptive Controller]
        \label{prob:safe_adaptive_controller}
        Given the robot's dynamics 
        and a set of observed in-distribution trajectories of agents $D:=\{y^{(1)},\dots,y^{(k)} \}$, design an MPC controller that enables an autonomous robot to safely navigate its environment, when operating alongside other agents whose trajectories may be ``in-distribution" or ``out-of-distribution."
    \end{problem}

    \section{Model Predictive Control}

    In this work, we utilize a learned ensemble, consisting of $n$ individual multi-layer perceptron (MLP) neural networks for trajectory prediction, given a dataset $D$. Neural network ensembles enable estimation of the probability distribution of the outputs of the model, which is valuable in preventing overfitting and in confidence estimation \citep{gustafsson2020evaluating}. We note that approaches such as Bayesian neural networks \citep{jospin2022hands} and dropout methods for neural networks \citep{labach2019survey} can be considered as special examples of ensemble learning, providing similar advantages. We use relatively small MLPs as the individual models in the ensemble to demonstrate the effectiveness of our algorithm with simple predictors. 
    
    Given the neural network ensemble, we summarize the predictions of the individual models within the ensemble using the mean and the covariance of the predictions, given by:
    \begin{align}
        {\mathbf{s}}_{t+1} =  \frac{1}{n}\sum_{i=1}^n f_{a}^{i}(\mathbf{s}_{t- \ell:t}), \quad %
        \Sigma_{t+1} = \frac{1}{n-1}\sum_{i=1}^n (f_{a}^{i}(\mathbf{s}_{t- \ell:t}) -{\mathbf{s}}_{t+1})(f_{a}^{i}(\mathbf{s}_{t- \ell:t}) - {\mathbf{s}}_{t+1})^\intercal, \label{eq:covariance_pred}
    \end{align}
    where $f_{a}^{i}$ represents the $i$th model in the ensemble, ${\mathbf{s}}_{t+1}$ denotes the mean prediction of the state of the environment at time step $t + 1$ (with the prediction made at time $t$), and
    ${\Sigma}_{t+1}$ represents the unbiased estimate of the corresponding covariance matrix for the predictions made by the models in the ensemble at time step $t$. The input to each network is $\mathbf{s}_{t- \ell:t}$, a concatenation of the (observed or predicted) states of the agents over the preceding ${\ell}$ steps, including the current time step. We write ${\mathbf{s}}_{t+1} = f_{a}(\mathbf{s}_{t- \ell:t})$ to denote the ensemble mean.

    We begin with designing a controller for the nominal case, where the trajectories of the dynamic agents follow the distribution given by $\mcal{D}$. In this case, we design a model predictive controller leveraging the predictive model $f_{a}$ to enforce collision-avoidance constraints. The corresponding model predictive control (MPC) problem is given by:
    \begin{equation}
        \begin{aligned}
            \textit{MPC\ I:} \hspace{0.5em} \min_{\mathbf{x}_{1:T}, \mathbf{u}_{0:T-1}} \;& J(\mathbf{x}_{1:T},\mathbf{u}_{0:T - 1})\\
            \st \;&\mathbf{x}_{\tau+1} = f_e(\mathbf{x}_\tau,\mathbf{u}_\tau), \enspace
            \mathbf{s}_{\tau+1} = f_a(\mathbf{s}_{\tau - l:\tau}), \enspace \forall \tau = 0,\dots, T-1, \\
            &g(\mathbf{x}_\tau,\mathbf{s}_\tau) \leq 0, \enspace \forall \tau=1,\dots,T, \label{eq:MPC-I}
        \end{aligned}
    \end{equation}
    where $J: \mathbb{R}^{T \cdot n_e} \times \mathbb{R}^{T \cdot m} \to \mathbb{R}$ denotes the objective function and $\mathbf{x}_{1:T}$ and  $\mathbf{u}_{0:T - 1}$ denote the concatenation of the optimization variables representing the robot's states and control inputs over the pertinent time steps respectively. We assume we have access to $s_{0}$, from which we predict the trajectory $\mathbf{s}_{\tau - l:\tau}$,~${\forall \tau}$, and $\mathbf{x}_{0}$, the initial state of the robot.
    We assume that the MPC problem has a planning horizon of $T$ time steps. We specify constraints enforcing initial conditions, collision avoidance, and other undesirable interactions in $g: \mathbb{R}^{n_e} \times \mathbb{R}^{n_a} \to \mathbb{R}$. The MPC problem is re-solved every $H$ time steps with updated observations.
    
    We note that the resulting MPC-based controller \eqref{eq:MPC-I} fails to provide safety guarantees, particularly when the dynamic agents exhibit OOD behavior. To address this limitation, we develop an adaptive controller, considering both in-distribution and OOD settings. Our OOD controller leverages the reachable set of the dynamic agents, representing the set of states that the dynamic agents can reach over a specified time duration. We denote the reachable set for the agents in the environment at time step $t$ by $\mathcal{R}_t \subseteq \mathcal{S}$, which we compute from: ${\mathcal{R}_{t+1} = \textsc{Reach}(\mathcal{R}_t)}$,
    with $\mathcal{R}_0 = \{\mathbf{s}_0 \}$, where ${\textsc{Reach}: \mathbb{S} \to \mathbb{S}}$ and $\mathbb{S}$ denotes all subsets of $\mathbb{R}^{n_a}$. 
    In computing the reachable set of the dynamic agents, e.g., via velocity-based reachability analysis, we make the following assumption:
    \begin{assumption}
        We assume knowledge of a maximum speed ${v_{max} \in \mbb{R}}$ for all agents.
    \end{assumption}
    
    This assumption is realistic, as we can use reasonable bounds for the top speed of agents in the robot's workspace, such as pedestrians.
    To provide safety assurances in OOD settings, we consider an MPC problem where the predictive model for the state of the agents in \eqref{eq:MPC-I} is replaced with a reachable-set-based constraint. The corresponding MPC problem is given by:
    \begin{equation}
        \begin{aligned}
            \textit{MPC\ II:} \hspace{0.5em} \min_{\mathbf{x}_{1:T}, \mathbf{u}_{0:T - 1}} \;& J(\mathbf{x}_{1:T},\mathbf{u}_{0:T - 1})\\
            \st \;&\mathbf{x}_{\tau+1} = f_e(\mathbf{x}_\tau,\mathbf{u}_\tau), \enspace 
            \mathcal{R}_{\tau+1} = \textsc{Reach}(\mathcal{R}_\tau), \enspace \forall \tau = 0,\dots, T-1, \\
            &g(\mathbf{x}_\tau,\mathcal{R}_\tau) \leq 0, \enspace \forall \tau=1,\dots,T, \label{eq:MPC-II}
        \end{aligned}
    \end{equation}
    where we overload notation, letting $g: \mathbb{R}^{n_e} \times \mathbb{S} \to \mathbb{R}$ denote constraints preventing undesirable interactions 
    (e.g., collisions) defined over subsets of $\mathbb{R}^{n_a}$. Although the \emph{MPC II} controller may be quite conservative, this controller can ensure safety even in situations where agents' behaviors do not follow the distribution $\mcal{D}$.
    
    To design a safe, OOD-adaptive controller for Problem~\ref{prob:safe_adaptive_controller}, we compose the MPC-based controller for the nominal setting (\emph{MPC I}) with the reachable-set-based controller (\emph{MPC II}) to enable safe navigation by autonomous robots, necessitating the design of a rule for switching between both controllers. Specifically, an autonomous robot must be able to distinguish between dynamic agents which are acting \emph{nominally} from those agents with \emph{OOD} behavior, concepts which we precisely define later in this work. To address this challenge, we introduce an OOD detector, which leverages conformal prediction to provide probabilistic guarantees. We state the OOD detection problem, prior to discussing our proposed approach to solving this problem:
    
    \begin{problem}[OOD detection]
        Given the set of observed trajectories $D:=\{y^{(1)},\dots,y^{(k)} \}$ and a failure probability $\delta \in (0,1)$, identify a signal $\rho$ that can correctly determine, with a probability of $1-\delta$, that an observed trajectory is sampled from the same probability distribution as that which generated the trajectories in set $D$.
    \end{problem}

    \section{Out-of-Distribution Detection}
    We utilize the spectral norm $\rho_{t+1}$ of the empirical covariance matrix ${\Sigma}_{t+1}$ \eqref{eq:covariance_pred} as our uncertainty measure to detect OOD samples, with ${\rho_{t+1} = \|{\Sigma}_{t+1} \|_2}$, noting that $\Sigma_{t+1}$ is positive semidefinite and symmetric for all $t$. However, related measures such as the trace, spectral norm, and Frobenius norm of the covariance matrix and techniques such as \citep{sharma2021sketching} and \citep{luo2022recencyprediction}
    can also be used as the uncertainty measure.
    We note that the techniques in  \citep{sharma2021sketching} and \citep{luo2022recencyprediction} require access to the whole training dataset, while we only require a handful of held-out data samples.
    We leverage conformal prediction to provide provable probabilistic safety guarantees for the safe adaptive controller introduced in this work (see 
    \appendixnote{app:conformal_prediction}
    for a brief introduction of conformal prediction).

    To define a valid prediction region, we use the spectral norm of the empirical covariance matrix
    as the nonconformity score: larger values of $\rho$ signify greater degrees of nonconformity. 
    The following remark results from Lemma $1$ in \citep{tibshirani2019conformal}, provided that $\rho^{(i)}$ for $i = 1,\dots,N$, the noncomformity measures of the data points in $D_{\mathrm{cal}}$, are exchangeable, with $\rho_{t + 1}$ being the nonconformity score associated with the prediction of the pedestrian position at time step $t+1$.
    \begin{remark}
        \label{rem:prob_guarantee_marginal}
        Given a failure probability ${\delta \in (0, 1)}$, a calibration dataset ${D_{\mathrm{cal}} \subset D}$, and the predictions $f_{a}^{i}(\cdot)$,~${\forall i}$, we have: ${P(\rho_{t + 1} \leq C) \geq 1 - \delta},$
         for ${t > 0},$ where $C$ represents the prediction region associated with $\delta.$
    \end{remark}
    
    Remark~\ref{rem:prob_guarantee_marginal} states a standard result from Conformal Prediction, which allows strong guarantees despite its simplicity. 
    For example, with $10$ calibration data points, we can obtain a test with an exact 10\% error rate. To understand this more easily, note that this result considers both the calibration data and the new unseen data points as random variables. In other words, this error rate is obtained exactly when marginalizing over all calibration sets drawn i.i.d. from the data distribution. However, in an engineering application, the calibration dataset is sampled once and fixed, which results in a probability distribution over the obtained error rate for the OOD classifier, which requires a more sophisticated statistical analysis \citep{shafer2008tutorial, angelopoulos2021gentle} beyond the scope of this paper. We state the distribution of coverage conditioned on the calibration set in the following theorem.
    
    \begin{theorem}
        \label{thm:prob_guarantee}
        Conditioned on a calibration dataset ${D_{\mathrm{cal}} \subset D}$, the coverage achieved by conformal prediction follows an analytic distribution given by:
        \begin{equation}
            P(\rho_{t + 1} \leq C) \sim \mathrm{Beta}(N + 1 - K, K), \enspace \forall t,
        \end{equation}
        where $N$ is the size of the $D_{cal}$ dataset and $K$ is the index value of the nonconformity score used to set $C$ when the scores in $D_{cal}$ are placed in nondecreasing order.
    \end{theorem}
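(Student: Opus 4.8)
The plan is to reduce the statement to the classical fact that an order statistic of i.i.d.\ uniform random variables is Beta-distributed, with the entire reduction powered by the exchangeability of the $\rho^{(i)}$ (already granted in the setup for Remark~\ref{rem:prob_guarantee_marginal}) together with the probability integral transform. Throughout I would assume---and would first argue why it holds here---that the nonconformity scores have a continuous distribution, so that ties among the $\rho^{(i)}$ occur with probability zero. This is reasonable in our setting since $\rho_{t+1} = \|\Sigma_{t+1}\|_2$ is a continuous functional of the ensemble outputs.

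First I would fix the calibration set and identify what the conditional coverage actually is. Let $F$ denote the common CDF of the scores and let $\rho^{(1)} \le \cdots \le \rho^{(N)}$ be the ordered calibration scores, so that $C = \rho^{(K)}$. Conditioned on $D_{\mathrm{cal}}$, the threshold $C$ is a fixed number, and since $\rho_{t+1}$ is drawn from the same distribution, the conditional coverage is exactly
\begin{equation}
    P(\rho_{t+1} \le C \mid D_{\mathrm{cal}}) = F(\rho^{(K)}). \nonumber
\end{equation}
The key observation is that, although this is a single number for each fixed calibration set, it becomes a random variable once we regard $D_{\mathrm{cal}}$ as random---and it is the distribution of this random variable, over the draw of the calibration set, that the theorem describes.

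Next I would apply the probability integral transform. Since $F$ is continuous, $V_i := F(\rho^{(i)})$ are i.i.d.\ $\mathrm{Uniform}[0,1]$; moreover $F$ is nondecreasing, so applying $F$ preserves ordering and $F(\rho^{(K)}) = V_{(K)}$, the $K$-th order statistic of $N$ i.i.d.\ uniforms. It then remains to invoke the standard order-statistics identity: $V_{(K)}$ has density proportional to $x^{K-1}(1-x)^{N-K}$ on $[0,1]$, i.e.\ $V_{(K)} \sim \mathrm{Beta}(K, N+1-K)$. Matching this to the stated Beta is a matter of index bookkeeping: the two parameterizations $\mathrm{Beta}(K, N+1-K)$ and $\mathrm{Beta}(N+1-K, K)$ are exchanged precisely by replacing the rank $K$ of $C$ counted from below with the count $N+1-K$ of scores at or above $C$ (equivalently, coverage versus miscoverage), so I would state carefully which index $K$ denotes and align it with the Beta shape parameters.

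I expect the main obstacle to lie not in the order-statistics computation, which is routine, but in two technical/bookkeeping points. First, making the continuity (no-ties) assumption explicit and arguing it holds for the spectral-norm score, since without it the transform step fails and one must resort to a randomized tie-breaking argument that blurs the clean Beta conclusion. Second, pinning down the conditioning so that ``coverage'' is unambiguously the random variable $F(\rho^{(K)})$ whose randomness is inherited from the calibration draw, and then ensuring that the chosen convention for the index $K$ produces the first versus second shape parameter as written---a point worth reconciling against the $P(\rho_{t+1} \le C) \ge 1-\delta$ guarantee of Remark~\ref{rem:prob_guarantee_marginal}, which fixes the mean of the coverage near $1-\delta$ and hence the orientation of the parameters.
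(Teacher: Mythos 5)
Your proposal is correct, and it takes a genuinely different route from the paper: the paper offers no argument at all for Theorem~\ref{thm:prob_guarantee}, deferring entirely to \citep{vovk2012conditional}, whereas you supply the self-contained derivation (conditional coverage equals $F$ evaluated at the threshold order statistic, probability integral transform, Beta law of uniform order statistics), which is essentially the content of the cited result. Your route also buys something concrete that a citation cannot: it exposes that the theorem's shape parameters, as literally stated, are swapped. Your derivation gives coverage $P(\rho_{t+1}\le C \mid D_{\mathrm{cal}}) = F(\rho^{(K)}) \sim \mathrm{Beta}(K,\,N+1-K)$, whose mean $K/(N+1)$ agrees with the paper's own calibration identity \eqref{eq:expected_threshold} and with the lower bound $1-\delta$ of Remark~\ref{rem:prob_guarantee_marginal}; the stated $\mathrm{Beta}(N+1-K,\,K)$, with mean $(N+1-K)/(N+1)\approx\delta$, is instead the law of the miscoverage $P(\rho_{t+1} > C)$ --- consistent with the sentence following the theorem (``the error rate follows a Beta distribution'') and with the $K=97$, $N=100$, $4.4\%$ numbers in the experiments, but not with the displayed equation. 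The two technical points you flag are exactly the right ones: the continuity (no-ties) assumption on the score distribution is needed for the probability integral transform and is nowhere stated in the paper, and the conditioning must be read so that the coverage is a random variable through the calibration draw. One minor notational nit: apply $F$ to the \emph{unordered} scores to obtain i.i.d. uniforms and then sort, rather than defining $V_i := F(\rho^{(i)})$ directly on the ordered scores (which are not i.i.d.); your subsequent sentence about $F$ preserving order makes clear this is what you intend.
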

    
    \begin{proof}
        The proof is presented in \citep{vovk2012conditional}. Hence, we omit the proof here.
    \end{proof}
    
    Theorem~\ref{thm:prob_guarantee} indicates that the error rate follows a Beta distribution, which can be used to obtain a confidence that the test holds with a given error rate. E.g., we may calibrate for a $10\%$ error rate, but find from this Beta distribution that we will obtain an error rate less than or equal to $10\%$  with $60\%$ probability and at most a $12\%$ error rate with $90\%$ probability. We note that with more samples in our calibration set, we obtain a tighter Beta distribution, resulting in a greater likelihood that we obtain an error rate close to the calibrated value. 
    This more nuanced analysis is discussed in \appendixnote{app:sim_appendix_stats_analysis},
    where we calibrate for a $4\%$ error rate, and obtain a $4.4\%$ error rate for our specific calibration set.
    At runtime, we detect OOD behavior when the non-conformity score $\rho_{t + 1}$ exceeds the calibrated threshold $C$.
    
    We note that \emph{naive generation} of the calibration dataset would result in the violation of the exchangeability assumption, since trajectories of the dynamic agents collected within the same interaction between the agents and the autonomous robot are not independent. In particular, such trajectories are time-correlated, making them non-exchangeable. To address this issue,
    we form the calibration dataset $D_{\mathrm{cal}}$ by uniformly randomly removing one data point from each trajectory in $D_{\mathrm{train}}$ and placing the sampled data in the calibration set. We use the remaining datapoints in the trajectory for training the neural networks. We provide a more detailed discussion of the data-generation procedure in \appendixnote{app:sim_appendix_dataset}.
    We note that the resulting calibration dataset satisfies the exchangeability assumption, by satisfying the stronger assumption that trajectories are independent and identically distributed. To complete the calibration procedure in the conformal prediction framework, we compute the noncomformity measure associated with each data point in $D_{\mathrm{cal}}$ and place them in nondecreasing order, to be used in the specification of $C$ based on the quantile of the empirical distribution corresponding to the desired value of $\delta$.

\section{Safe, Out-of-Distribution-Adaptive MPC}
    Given an OOD detector, we compose a safe OOD-adaptive control architecture that uses the MPC-based controller in \eqref{eq:MPC-I} with the nominally-acting, dynamic agents, and switches over to a conservative reachable-set-based approach when OOD behavior is detected. Algorithm~\ref{alg:safe_adaptive_controller} summarizes our proposed method \emph{SODA-MPC}: Safe, Out-of-distribution-Adaptive Model Predictive Control, which is illustrated in Figure~\ref{fig:controller_structure} and described in greater detail in \appendixnote{app:soda_mpc}.

    \section{Simulations} \label{Simulations}
    We evaluate the performance of our controller in both in-distribution and OOD settings, and compare its performance to that of controllers presented in \citep{muthali2023multi} and \citep{lindemann2023safe}, which utilize conformal prediction to provide probabilistic safety guarantees, in a problem where 
    an autonomous vehicle attempts to safely navigate to its destination without colliding with a pedestrian crossing the road at a crosswalk. In the in-distribution case, which we refer to as \emph{nominal} behavior, the pedestrian follows a trajectory from the test dataset, $D_{\mathrm{test}}$. Meanwhile, in the OOD case referred to as \emph{insurance fraud} behavior, the pedestrian attempts to force a collision by approaching the vehicle at maximum speed, which constitutes a worst-case scenario, after initially acting nominally for $1.3$ seconds. To overcome such an attack, an autonomous vehicle must detect the OOD behavior quickly and adapt its controller to avoid a collision.

    \begin{minipage}[t]{0.46\textwidth}
            \vspace{0ex}
            \begin{algorithm2e} [H]
                \caption{SODA-MPC: Safe, \\ Out-of-distribution-Adaptive \\ Model Predictive Control}
                \label{alg:safe_adaptive_controller}
            
                \SetKwRepeat{doparallel}{do in parallel}{while}
            
                \KwIn{Calibraton Dataset $D_{\mathrm{cal}}$ and Failure Probability $\delta$ or Index $K$.}
            
                Calibrate the OOD Detector.
            
                \For{$t \gets 0,H,2H,\ldots$} {
                    \tcp{Observe the environment}
                    $y_{t} \gets$ Sensor(t)\;
                    \tcp{Detect OOD behavior}
                    $B \gets$ OOD\_Detector($y_{t}$)\;
                    \tcp{Execute the MPC Controller.}
                    \uIf{$B = 0$} {
                        \tcp{Nominal Controller.}
                        $(u_{t},\ldots,u_{t + T - 1}) \gets$ MPC I \eqref{eq:MPC-I}
                    }
                    \uElse {
                        \tcp{Reachable-Set-Based Controller.}
                        $(u_{t},\ldots,u_{t + T - 1}) \gets$ MPC II \eqref{eq:MPC-II}
                    }
                    Apply $u_{t:t+H}$.
                }
            \end{algorithm2e}
            \end{minipage}
            \hfill
            \begin{minipage}[t]{0.46\textwidth}
                \vspace{0ex}
                \includegraphics[width=0.95\linewidth]{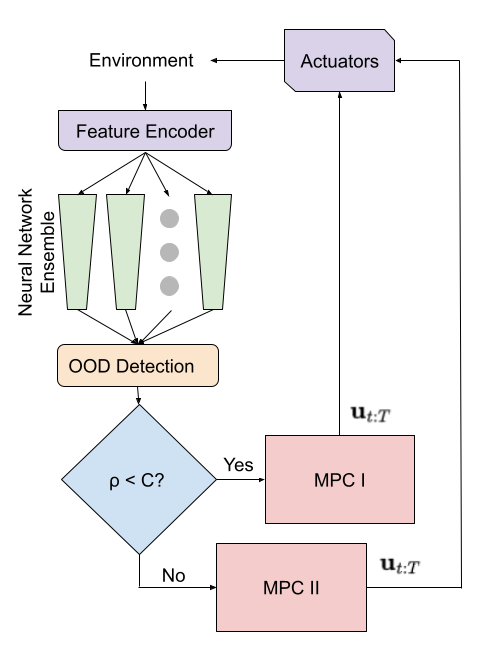}
                \captionof{figure}{Architecture of SODA-MPC: SODA-MPC detcts OOD behavior from online observations to select a safe control strategy, depending on the behavior of other agents.}
                \label{fig:controller_structure} 
            \end{minipage}
            \vspace{2ex}

    \subsection{Small, Feedforward MLP Trajectory-Prediction Models}
    Here, we examine the effectiveness of SODA-MPC in detecting OOD behavior, in problems where the trajectory-prediction model consists of small, feedforward MLPs. Specifically, we predict the future state of a pedestrian using an ensemble of identical MLPs, each with 66 parameters in total, trained on the VCI dataset \citep{yang2019top}. We present the architecture of these models and the associated training dataset in 
    \appendixnote{app:sim_appendix}.

    \smallskip
    \noindent\textbf{Empirical Results.}
    We assess the performance of our control algorithm \emph{\mbox{SODA-MPC}} against four baselines, namely: (a) \emph{SPDE} \citep{lindemann2023safe}; (b) \emph{MARC} \citep{muthali2023multi}; (c) \emph{Reachable Sets Only}, a variant of \mbox{SODA-MPC} utilizing only the MPC II controller; and (d) \emph{Ensembles Only}, a variant of \mbox{SODA-MPC} utilizing only the MPC I controller. We examine each algorithm in $20$ simulations, presenting the results achieved by each algorithm in the subsequent discussion. Each simulation terminates in one of three ways: (a) the autonomous vehicle goes past the pedestrian while avoiding collisions, (b) the autonomous vehicle does not go past the pedestrian but avoids a collision (e.g., when stopping to allow the pedestrian to finish crossing the road), or (c) the autonomous vehicle collides with the pedestrian. In \mbox{SODA-MPC}, we utilize a failure probability of approximately $4\%$ in the conformal prediction procedure for computing prediction regions, associated with ${C = 0.012}$. In contrast, the uncertainty quantification for the MARC algorithm \citep{muthali2023multi} corresponds to a failure probability of $5\%$ and the ball-shaped confidence areas for the SPDE algorithm \citep{lindemann2023safe} correspond to a failure probability of $5\%$. In all the figures, we depict the past trajectory of the autonomous vehicle in blue when the autonomous vehicle detects nominal behavior and in red when the autonomous vehicle detects OOD behavior. We depict the past trajectory of the pedestrian in cyan.
    In Figure~\ref{fig:sim_results}, we show that \mbox{SODA-MPC} and the more conservative methods MARC and \emph{Reachable Sets Only} do not collide with the pedestrian, even when the pedestrian attempts to force a collision. In the nominal case, the \emph{Ensembles Only} method achieves the highest success rate at $70\%$, followed jointly by \mbox{SODA-MPC} and SPDE, which achieve a success rate of $60\%$. Consequently, \mbox{SODA-MPC} provides a desirable tradeoff across both behavior modes.
    
    \begin{figure}[th]%
        \centering
        \begin{minipage}[t]{0.47\textwidth}
            \vspace{0pt}
            \centering
            \includegraphics[width=\linewidth]{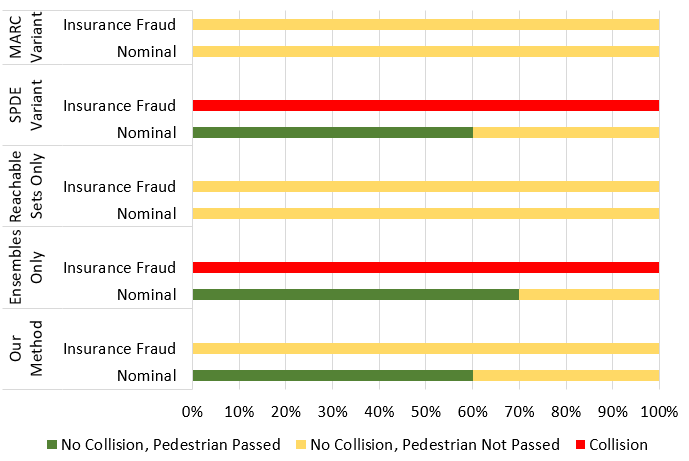}
            \caption{In a pedestrian-crossing scenario, only \mbox{SODA-MPC} (our method) both passed the horizontal position of the pedestrian in the nominal case and avoided collisions with the pedestrian in the \emph{insurance fraud} case, compared to the other baselines.
            \label{fig:sim_results}}
        \end{minipage}
        \hfill
        \begin{minipage}[t]{0.47\textwidth}
            \vspace{0pt}
            \centering
            \includegraphics[width=\linewidth]{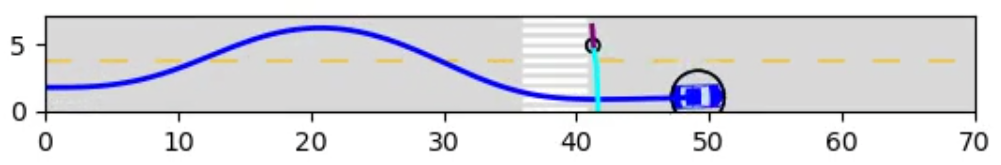}
            \includegraphics[width=\linewidth]{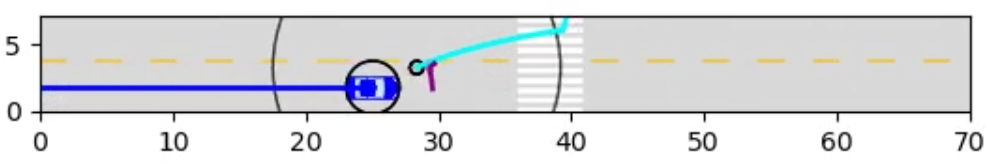}
            \includegraphics[width=\linewidth]{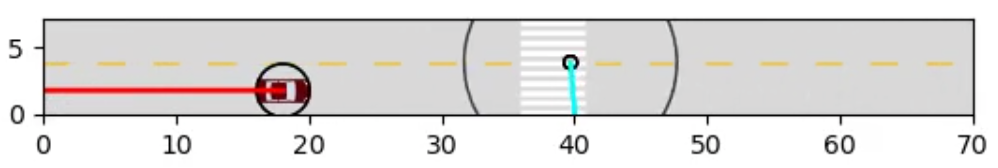}
            \includegraphics[width=\linewidth]{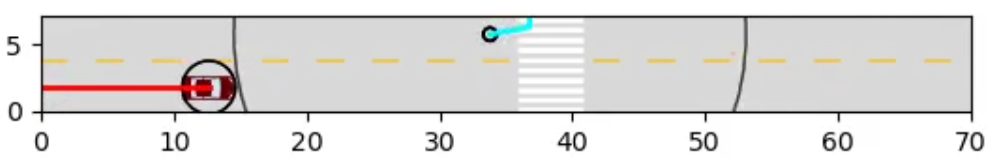}
            \caption{In the top-two rows, the \emph{Ensembles Only} method avoids collisions in the nominal case (top) and collides in the \emph{insurance fraud} setting (bottom). In the bottom-two rows, the overly-conservative \emph{Reachable Sets Only} method avoids collisions in both settings.}
            \label{fig:ensemble_only}
        \end{minipage}
    \end{figure}

    While the \emph{Ensembles Only} method could reliably avoid collisions in the nominal case, the method failed to avoid collisions in the \emph{insurance fraud} case, since the predictions of the pedestrian's trajectory differed wildly from the actual trajectory taken by the pedestrian, as depicted in Figure~\ref{fig:ensemble_only}. In contrast, the \emph{Reachable Sets Only} method avoids collisions in all settings; however, the autonomous vehicle was consistently unable to reach its desired goal location during the simulation, as illustrated in Figure~\ref{fig:ensemble_only}. We note that the MARC algorithm achieved similar conservative results to those of the \emph{Reachable Sets Only} method.
    Although SPDE remains safe while navigating the vehicle to its goal in the nominal case, SPDE does not avoid collisions when the pedestrian attempts to forcefully collide with the vehicle: this can be attributed to the erroneous predictions generated by the neural network ensemble. Though the method adds a buffer around the position of the pedestrian based on the estimated uncertainty of its prediction, the uncertainty is calibrated under the assumption that the true pedestrian trajectory is drawn from the same distribution as in the training data, while in the \emph{insurance fraud} setting, it is drawn from an altogether different distribution. In contrast, \mbox{SODA-MPC} avoids collisions in all settings, e.g., in the \emph{insurance fraud} setting, the controller switches from the nominal control strategy to the reachable-set-based control strategy after detecting OOD behavior, depicted in Figure~\ref{fig:SAM_CP}.
    We provide empirical results on the statistical guarantees of our OOD detector in 
    \appendixnote{app:sim_appendix_stats_analysis},
    showing that our detector achieves a false-positive rate of 4.4\% while correctly identifying OOD behavior 91.3\% of the time.

    \subsection{Trajectron++, a Large-Scale Trajectory-Prediction Model}
    Here, we evaluate the performance of our proposed OOD detector applied to the large-scale trajectory predictor Trajectron++ \citep{salzmann2020trajectron++} as the trajectory predictor on the nuScenes \citep{caesar2020nuscenes} dataset. We provide additional setup details in 
    \appendixnote{app:sim_appendix_trajectron}.

    \smallskip
    \noindent\textbf{Empirical Results.}
    For the calibration procedure, we set $K=97$ for $N=100$ for the calibration set point, yielding a detection threshold of $C=0.933$. Here, predictions are evaluated at the trajectory level, rather than at individual time steps: a trajectory as a whole is considered to be anomalous if the uncertainty value for at least one of its constituent time steps is greater than the detection threshold, and is considered to be nominal otherwise.
    Nominal cases cover 6230 trajectories of pedestrians in the dataset, and OOD cases use 230 trajectories with shifted pedestrian behavior. We use a calibrated error rate of $\delta \approx 4\%$ for individual data points. In this setting, our OOD detector correctly identifies 5361 (86.1\%) of the nominal trajectories as nominal, with a false-positive rate of 13.9\% (869 trajectories). Likewise, our OOD detector correctly identifies 211 (91.7\%) of the \emph{insurance fraud} trajectories as OOD, with a false-negative rate of 8.3\% (19 trajectories), summarized in Table \ref{tab:confusion_matrix_trajectron}
    in 
    \appendixnote{app:sim_appendix_trajectron}.
    Compared to an ensemble of identically trained MLP models, the increase in uncertainty seen between different modes when an agent was displaying anomalous behavior was still detectable but less pronounced: we believe this may be ascribed to the cost function of the Trajectron++ model being specifically designed to promote divergence between the prediction modes, rather than training them all to react identically to training data points. This phenomenon was counteracted by evaluating predictions at the trajectory level, which increases the overall sensitivity of the system to detections of anomalous data points. As trajectories displaying \textit{insurance fraud} behavior still have a large number of data points below the detection threshold, this vastly improved the detection rate for anomalous behavior, although the increased sensitivity did increase the incidence of nominal data points being mischaracterized as anomalous compared to the experiments with ensembled feedforward MLP models.

    \section{Conclusion, Limitations, and Future Work}
    We introduce SODA-MPC, a safe, adaptive controller that enables autonomous robots to safely navigate in their environments in nominal or OOD settings. SODA-MPC is comprised of a nominal controller based on learned trajectory predictors for in-distribution settings, and safe-by-design controller based on reachable sets for OOD cases. Our OOD detector uses conformal prediction to provide provable probabilistic guarantees on the validity of the prediction regions. We demonstrate the safety and efficacy of our controller in a pedestrian-crossing scenario. With SODA-MPC, the autonomous vehicle avoids collisions with the pedestrian in all settings, and is able to pass the position of the pedestrian when the pedestrian acts in-distribution; whereas some other existing methods either result in collisions, when the pedestrian attempts to force them, or are overly conservative (failing to reach the goal).
    Our OOD-controller may be too conservative in some settings, especially in dense (congested) scenarios. In future work, we seek to explore the utilization of deep-learned controllers in combination with (or in lieu of) MPC-based controllers in our control framework, which could enable the development of more expressible control frameworks. Further, we are interested in exploring the use of our method in uncertainty quantification to identify and search for areas of the predictor input space not appropriately covered by training data, and in examining other uncertainty quantification metrics for OOD detection. 
    
\acks{This work was supported in part by ONR grant N00014-23-1-2354, NASA ULI grant 80NSSC20M0163, DARPA grant HR001120C0107, and NSF grant FRR 2342246. Toyota Research Institute provided funds to support this work.}

    \bibliography{references}
    
    \clearpage

    \appendix

    \section{Conformal Prediction}
    \label{app:conformal_prediction}
    Conformal prediction was introduced in \citep{vovk2005algorithmic} and \citep{shafer2008tutorial} to compute prediction regions for complex predictive models without making assumptions about the distribution underlying the predictive model \citep{angelopoulos2021gentle}. We present a brief introduction based on the assumption that the observations arise from independent and identical distributions for easier presentation. However, we note that the theory of conformal prediction only requires the assumption of \emph{exchangeability}. Observations in a set ${\{\rho^{(1)},\ldots,\rho^{(N)}\}}$ are exchangeable if all possible orderings of these observations are equally likely to occur. Consequently, exchangeability represents a weaker assumption compared to the assumption of independent, identically-distributed observations.
    
    Let $R^{(1)},\dots,R^{(N)}$ be $N$ independent and identically distributed random variables. The variable $R^{(i)}$ is generally referred to as the \textit{nonconformity measure} or \emph{score}. Conformal prediction enables us to construct a valid prediction region $C$ associated with a given failure probability ${\delta \in (0,1)}$, satisfying: ${P(R^{(0)} \leq C) \geq 1-\delta},$
    for a random variable $R^{(0)}$. If the set of observations $R^{(1)},\dots,R^{(N)}$ is sorted in non-decreasing order, with the addition of $\infty$ to the set, we can define a valid prediction region
    by setting $C$ to be the $(1-\delta)$th quantile, given by ${C = R^{(p)}}$, where ${p := \lceil (N+1)(1-\delta) \rceil}$ and ${\lceil \cdot \rceil}$ denotes the ceiling function (Lemma $1$ of \citep{tibshirani2019conformal}). We note that $p$ represents the index in the sorted set of observations.
    
    \section{OOD-Adaptive Controller Design}
    \label{app:soda_mpc}
    We calibrate the OOD detector prior to solving the MPC problems for the control inputs, given a calibration dataset $D_{\mathrm{cal}}$ and a desired failure probability $\delta$, or equivalently, the corresponding index of the sample in the sorted set of samples of nonconformity scores. Here, we describe the calibration process given the value of this index. 
    Given $N$ samples in the sorted calibration dataset $D_{\mathrm{cal}}$ and a
    desired index value $K$, the probability that a data point randomly sampled from a trajectory in the empirical distribution has a nonconformity score less than that of the $K$th point in $D_{\mathrm{cal}}$ is given by:
    \begin{align} \label{eq:expected_threshold}
        P(\rho_{t+1} \leq \rho^{(K)}) = \frac{K}{N+1},
    \end{align}
    provided the samples are exchangeable and assuming randomness over $\rho^{(i)}$,~${\forall i}$, (see Theorem D.1 in \citep{angelopoulos2021gentle}).
    This corresponds to our desired failure probability $\delta$, as
    \begin{align}
        P(\rho_{t+1} \leq \rho^{(\lceil(N+1)(1-\delta)\rceil)}) = \frac{\lceil(N+1)(1-\delta)\rceil}{N+1} \geq (1-\delta).
    \end{align}
    Further, we set the threshold $C$ using the value of $\rho^{(K)}$: any value of $C$ such that $C \geq \rho^{(K)}$ preserves the inequality:
    \begin{align}
        P(\rho_{t+1} \leq C) \geq (1-\delta).
    \end{align}
    
    \begin{remark}
        We note that although $\delta$ can be freely set by the user, in practice the resolution of increments in confidence in the associated coverage, as well as its maximum and minimum value, are determined by the size of the calibration set.
    \end{remark}
    
    The MPC control strategy is determined by the output of the OOD detector, with the value ${B = 0}$ corresponding to nominal behavior and ${B = 1}$ corresponding to OOD behavior. In the nominal setting, our controller utilizes the nominal MPC control scheme \emph{MPC I} and the conservative MPC control scheme \emph{MPC II} in the OOD setting. We apply the control inputs computed by the controller for $H$ time steps, before resolving the corresponding MPC problems over a horizon of $T$ time steps.

    \section{Simulation Setup}
    \label{app:sim_appendix}

    \subsection{Predictor Architecture}
    The neural network ensemble used for OOD detection and trajectory prediction consists of $n=10$ identical sequential multi-layer perceptrons (MLPs). Each MLP, the structure of which is visualized in Figure \ref{fig:predictor_architecture}, is comprised of two dense hidden layers with $32$ neurons and an output layer with $2$ neurons. We use the ReLU (rectified linear unit) activation function after each hidden layer. The input to each MLP consists of the positions of the pedestrian over $\ell=14$ consecutive time steps, while each MLP outputs the $2$D position of the pedestrian at the next time step.

    \subsection{Dataset}
    \label{app:sim_appendix_dataset}
    We use the Vehicle-Crowd Intraction[sic] (VCI) dataset \citep{yang2019top} to train and evaluate the performance of each method in simulation. The dataset used, a subset of the total VCI dataset, consists of $110$ trajectories of pedestrians crossing a roadway, with each trajectory comprised of the position data of a pedestrian for $154$ time steps. In our evaluations, we randomly initialize the position of the pedestrian, relative to the ego vehicle. The horizontal starting positions are sampled from a normal distribution with a mean $40$ meters from the horizontal starting position of the ego vehicle and a standard deviation of $2.5$ meters. The starting vertical position of the pedestrian is determined by the direction of travel in the specific trajectory obtained from the dataset. To generate the training data, we process the dataset into input-output data pairs, where the input consists of the positions of the pedestrian over $14$ consecutive time steps and the output consists of the position of the pedestrian at the following time step.
    
    Of the 110 trajectories, 10 are selected randomly to serve as the \emph{test dataset} $D_{\mathrm{test}}$ and used in evaluating the performance of each algorithm. For the remaining 100 trajectories, one data pair from each trajectory is selected uniformly randomly and removed, and these 100 data points form the \textit{calibration dataset} $D_{\mathrm{cal}}$ used in conformal prediction. The remaining data points comprise the \emph{training dataset} $D_{\mathrm{train}}$, used in training the parameters of each neural network in the ensemble through Adam optimization. These neural networks were used in testing for nearly all of the methods and benchmarks, with the exception of the testing for benchmarks based on \citep{lindemann2023safe} and \citep{muthali2023multi}. For the benchmark based on \citep{lindemann2023safe}, 20 uniformly randomly selected trajectories from the 100 full trajectories were used to form the calibration dataset, with the remaining 80 trajectories used in training the neural networks. For the benchmark based on \citep{muthali2023multi}, no calibration dataset was formed, and all input-output data points from the 100 trajectories were used for training.
    
    To account for an imbalance in the training dataset between the number of trajectories where the pedestrian moves upwards and the pedestrian moves downwards, trajectories where the pedestrian is moving upwards are vertically reflected to prevent the overall accuracy of the ensemble from changing depending on the vertical direction of the pedestrian's movements.
    We do not perform extensive feature engineering, although the described techniques will work well in cases where feature engineering is included for the inputs to the neural network ensemble.
    
    One of the limitations of the dataset is the amount of data available: with only 15400 data points in total available across training, calibration and test datasets, it is not possible to make statistical determinations that depend on large amounts of data. To remedy this, a neural network more sophisticated than those used for the predictor ensemble was trained on the existing trajectories and used to generate additional pedestrian trajectories, which we used for the calibration process and statistical analysis. The architecture of the network used to generate new data can be found in Figure \ref{fig:synthesizer_architecture}, where the neural network uses an LSTM layer with 128 neurons, a dense layer of 128 neurons and ReLU activations to transform rectilinear trajectories of 2-dimensional states into trajectories of two-dimensional states mimicking the behavior of a pedestrian.

    \subsection{OOD Detector}
    In our simulations, we utilize ${K=97}$, for $N=100$ (i.e., $\delta \approx 3.9604\%$) in calibrating the OOD detector, which yielded good results for reliable OOD detection, with $C$ set to the value of $\rho^{(K)}$ rounded up to the third decimal point. Careful readers will notice that the probabilistic guarantees are specifically placed on the probability of correctly determining that a given data point was generated by the same distribution that generated the training data, but as we will show in the section detailing the results of our experiments, over the course of our experiments we were able to reliably detect changes in the underlying data-generating distribution.

    In our simulations, our OOD detector classifies any data point with an uncertainty score above $C$ as being out-of-distribution. We note, however, that the simplicity of this detection framework can result in incorrect classifications of in-distribution samples as OOD, given the probabilistic guarantees in Theorem~\ref{thm:prob_guarantee}. Nevertheless, we can address this challenge using existing methods such as \citep{bates2023testing}, which enables the detector to ignore occasional high-uncertainty scores from in-distribution points, but still respond appropriately to persistent high-uncertainty scores that indicate a distribution shift. Further, other uncertainty metrics in anomaly detection, such as those in \citep{sharma2021sketching} and \citep{laxhammar2011sequential} can be applied in OOD detection. However, these methods require access to the training and calibration data, unlike our method.

    \subsection{Dynamics Model}
    The dynamics model of the autonomous vehicle is similar to the Reeds-Shepp car \citep{reeds1990optimal}, a car that can move both forwards and backwards as well as turn left and right. We denote the state of the autonomous vehicle by ${\mathbf{x}_t = (x_t^{\intercal},\theta_t,V_t,\kappa_t)^\intercal}$, consisting of its $2$-D position ${x_{t} \in \mathbb{R}^{2}}$, its attitude (heading-angle) ${\theta_{t} \in \mathbb{R}}$, its longitudinal velocity ${V_{t} \in \mbb{R}}$, and the curvature of its trajectory ${\kappa_{t} \in \mbb{R}}$.
    We impose limits on the vehicle's speed with ${\lvert V_{t} \rvert \leq V_{\max}}$,~${\forall t}$, where $V_{\max}$ denotes the maximum attainable speed by the vehicle in any direction. Further, we constrain the curvature of the vehicle's trajectory with ${\lvert \kappa_{t} \rvert \leq \kappa_{\max}}$,~${\forall t}$, where $\kappa_{\max}$ denotes the maximum allowable curvature.
    The vehicle's control input at time $t$, denoted by ${\mathbf{u}_{t} \in \mbb{R}^{2}}$, comprises of its acceleration ${a_{t} \in \mbb{R}}$ and the pinch (the first derivative of the curvature) of its trajectory ${p_{t} \in \mbb{R}}$. Given the state and control inputs of the autonomous vehicle, we describe its dynamics in discrete time by the model:
    \begin{equation}
        \label{eq:vehicle_dynamics}
        \mathbf{x}_{t+1} = \mathbf{x}_t + h \begin{bmatrix} 
        V_t \cos \theta_t \\
        V_t \sin \theta_t \\
        V_t \kappa_t \\
        a_t \\
        p_t \\
        \end{bmatrix}.
    \end{equation}
    The dynamics model in (\ref{eq:vehicle_dynamics}) results from applying Euler's Forward Integration to approximate the solution to the ordinary differential equations describing the dynamics of the car. In Figure~\ref{fig:vehicle_state_diagram}, we show the state variables of the autonomous vehicle.
    
    \begin{figure}[th]%
        \begin{minipage}[t]{0.46\textwidth}
            \centering
            \includegraphics[width=\linewidth]{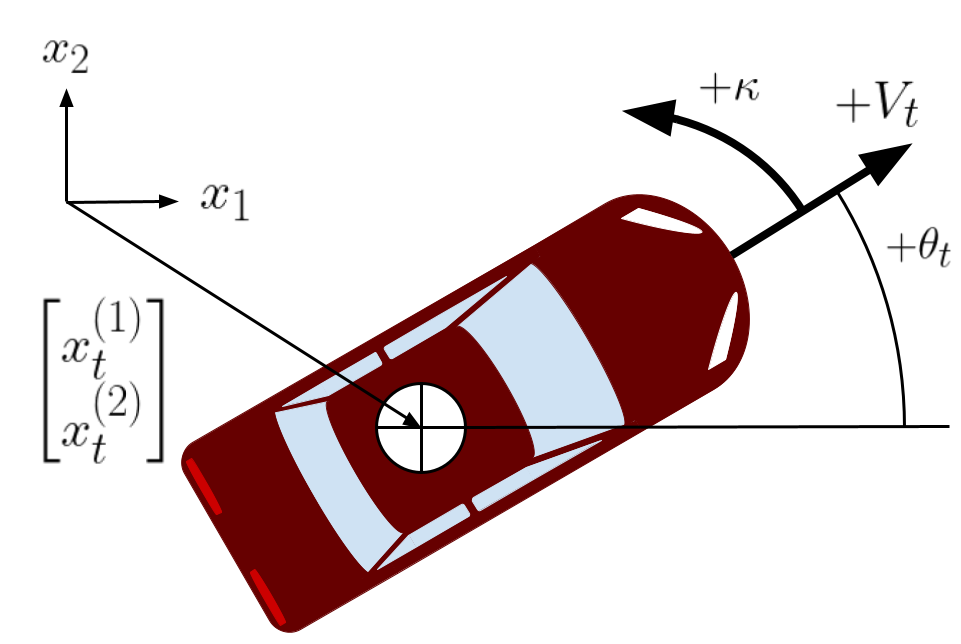}
            \captionof{figure}{State of the Reeds-Shepp Vehicle, with its position ${x_{t} \in \mathbb{R}^{2}}$, its speed ${V_{t} \in \mbb{R}}$, its heading-angle ${\theta_{t} \in \mathbb{R}}$, and curvature ${\kappa_{t} \in \mbb{R}}$.}
            \label{fig:vehicle_state_diagram}
        \end{minipage}
        \hfill
        \begin{minipage}[t]{0.46\textwidth}
            \centering
            \includegraphics[width=\linewidth]{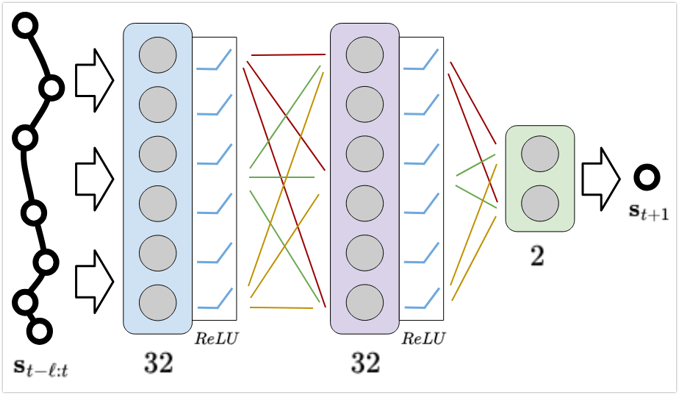}
            \captionof{figure}{Architecture of each predictor network $f_{a}^{i}$, comprising the neural network ensemble for trajectory prediction of pedestrians.}
            \label{fig:predictor_architecture}
        \end{minipage}
    \end{figure}

    \subsection{Trajectory Prediction}
    To generate predictions at each time step for the movements of a pedestrian for the time remaining in the simulation, we recursively apply predictions for the number of time steps remaining in the simulation. Figure~\ref{fig:predictor_architecture} shows the architecture of each predictor. We note that the individual models comprising the ensemble are not required to have the same architecture. After making a prediction at time step $t$ of the pedestrian's position at time step $t+1$ (using the pedestrian's positions during the preceding $\ell$ time steps), we query the network ensemble again and define the input to the network by combining the positions corresponding to the preceding $\ell-1$ time steps and the prediction made for $t+1$ in order to get a prediction for the pedestrian position at $t+2$; we subsequently make predictions further into the future in this way until we reach the number of remaining time steps in the simulation. To predict pedestrian movements for the initial time steps (before enough time steps have elapsed in a simulation for the network ensemble to make predictions), we assume that the pedestrian moves at $1.1m/s$, a speed close to the mean pedestrian speed recorded in the trajectories of the training set, with the direction of movement given by the starting position of the pedestrian.

    This particular neural network design and implementation of multi-step predictions was selected to minimize the complexity of the neural network architecture. While the method of prediction is less sophisticated and computationally efficient than many state-of-the-art implementations, the reduced complexity allows us to emphasize that the principles of neural networks exploited to detect out-of-distribution behavior work in small-scale implementations and are not emergent behavior of more sophisticated models.

    \subsection{Model Predictive Control}
    The MPC optimizer uses an implementation of sequential convex programming, including affine approximations of nonconvex constraints. Starting with an initial estimate for a trajectory that does not incorporate physical constraints, the optimization problem is run multiple times until the solution converges. The initial estimate used depends on the control strategy. When the control strategy is based on neural network predictions, the estimate first consists of linear interpolations of the starting and goal states, and then, of a trajectory that avoids obstacles but ignores constraints imposed by control limits. For the conservative control strategy, the initial estimate consists of repeated instances of the starting state. Once an initial estimate is produced, the MPC optimizer computes full, distinct trajectories of states and control inputs to reach as close to the target goal as possible by iteratively modifying the solution until convergence is reached. In this particular implementation, the solver resolves the problem every $H=5$ time steps, each time calculating the trajectory for the vehicle to follow for the time remaining in the simulation.
    
    \subsection{Simulation Setup}
    The simulation environment is based on a two-lane road, $7.2$ meters wide in total. The goal of the ego vehicle is to move with an initial velocity of $10\mathrm{m/s}$ parallel to the road from an initial position in the middle of the right lane  to a goal position in the middle of the right lane $70\mathrm{m}$ further along the road at the same velocity. In the objective function, we penalize the error between the vehicle's position and the goal position more than the error between its velocity and the goal velocity. Each simulation runs for $150$ time steps at $23.976\mathrm{Hz}$, the frequency at which the original pedestrian data was sampled, meaning each simulation represents approximately $6.256$ seconds in real time.
    
    The dimensions of the autonomous vehicle are based on the dimensions of the 2019 Ford Fiesta \citep{ford2019fiesta}, with a width of $1.8$ meters, a length of $4.0$ meters and a minimum turn radius of approximately $5.913$ meters. We assume the autonomous vehicle has a maximum speed of $20\mathrm{m/s}$, while the pedestrian has a maximum speed of $4.5\mathrm{m/s}$, close to the speed at which an average person can run for short periods of time. The vehicle has two radii for collision and obstacle avoidance: $0.9\mathrm{m}$, based on vehicle width, which is the minimum distance that the vehicle must keep from the upper and lower road boundaries, and $2.0\mathrm{m}$, based on vehicle length, which is the minimum distance that the vehicle must keep from the pedestrian to prevent a collision.
    The pedestrian is represented as a circular object with a radius of $0.5\mathrm{m}$ and is assumed to be able to move in any direction on a planar surface, with the state at time step $t$, $\mathbf{s}_t \in \mbb{R}^2$, consisting of the position of the pedestrian in two dimensional space. In these simulations, we consider a single pedestrian interacting with an autonomous vehicle. However, we note that our proposed algorithm applies to problems with multiple pedestrians, or problems with both pedestrians and other vehicles.
    
    \subsection{Synthetic Data}
    To train the neural network used in generating synthetic data, original pedestrian states were used as example outputs: $\mathbf{s}_{1:T}$ represents all of the states in one trajectory with $T$ time steps. Simultaneously, the corresponding inputs to each trajectory, $\mathbf{r}_{1:T}$, were generated by taking the starting and ending states of the trajectory so that $\mathbf{r}_1 := \mathbf{s}_1$ and $\mathbf{r}_T := \mathbf{s}_T$ and then finding linear interpolations of these states. New, synthetic trajectories were generated after training by creating new rectilinear trajectories to serve as inputs to the network: as average pedestrian speeds and the angles between their starting and ending states each approximated a normal distribution, new rectilinear trajectories were created by sampling from normal distributions following the parameters matching those of the real data and passing the resulting trajectories through the network. The vertical direction then had a 50\% chance of being inverted, and the horizontal starting position of the pedestrian relative to the starting position of the ego vehicle was normally distributed as the trajectories drawn from the VCI dataset were.
    
    \subsection{Pedestrian Behavior}
    For each simulation, the pedestrian adopts one of two modes of behavior for the whole trajectory. For each of the $10$ trajectories in the test set, the pedestrian either follows the selected trajectory throughout while crossing the crosswalk (representing in-distribution behavior for the predictor ensemble), referred to as \emph{nominal} behavior, or the pedestrian can follow one of these trajectories for a set period of time (in these simulations, $1.3$ seconds) and then suddenly change their behavior by continuously moving towards the ego vehicle at maximum speed, referred to as \emph{insurance fraud} behavior, representing a pedestrian attempting to force a collision with the ego vehicle. This presents both a worst-case scenario for the ego vehicle, which must prevent a collision despite the restrictions on its movements and a shift in distribution that the OOD detector must identify and respond to over the course of the simulation.

    \subsection{Statistical Analysis of Safety Assurances on Real and Synthetic Data}
    \label{app:sim_appendix_stats_analysis}
    Here, we analyze the statistical guarantees of our out-of-distribution detector empirically. We note that conformal prediction only provides probabilistic guarantees on the false-positive rate (i.e., on an incorrect identification of a data point in a nominal trajectory as an out-of-distribution sample). To assess the tightness of this guarantee, we apply the out-of-distribution detector at $27$ time steps in each of the $10$ \emph{nominal} simulations, and at the $23$ corresponding time steps after the shift in pedestrian behavior from nominal to OOD behavior in each of the $10$ \textit{insurance fraud} simulations. We summarize these results in Table~\ref{tab:confusion_matrix}, noting that the OOD detector has a false-positive rate of ${4.4\%}$, which is close to the probabilistic error rate of ${4\%}$ used as a calibration setpoint for conformal prediction. In Figure~\ref{fig:false_positive}, we show a simulation where the detector incorrectly classifies a nominal behavior as being OOD. However, our detector is able to recover from incorrect classifications and subsequently classifies the behavior of the pedestrian as being in-distribution.
    
    \begin{table}[th]
    	\centering
    	\caption{Confusion matrix showing the performance of the out-of-distribution (OOD) detector on the pedestrian trajectory data. 
        The calibrated error rate $\delta \approx 4\%$ is close to the empirical false-positive rate
        (shown in bold).}
    	\label{tab:confusion_matrix}
    	\begin{adjustbox}{width=0.6\linewidth}
    		{\begin{tabular}{l c c}
    				\toprule
    				True Behavior & Identified as Nominal & Identified as OOD  \\
    				\midrule
    				Nominal & $258$ ($95.6\%$) & $\mbf{12}$ ($\mbf{4.4\%}$) \\
    				  OOD & $20$ ($8.7\%$) & $210$ ($91.3\%$) \\
    				\bottomrule
    		\end{tabular}}
    	\end{adjustbox}
    \end{table}
    
    \begin{figure}
        \centering
        \includegraphics[width=0.65\linewidth]{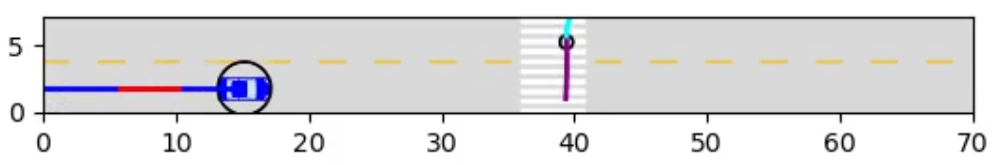}
        \caption{An instance of recovery from misclassification: in this simulation, the detector wrongly classifies nominal behavior as OOD, leading to a temporary switch from the nominal MPC controller (blue trajectory) to the conservative, provably-safe controller (red trajectory), before returning to the correct classification.}
        \label{fig:false_positive}
    \end{figure}
    
    Our OOD detector correctly identifies ${91.3\%}$ of the OOD samples, falsely identifying ${8.7\%}$ OOD samples as nominal samples. In all $10$ ``insurance fraud" simulations, these misclassifications occur during the first two evaluations performed directly after the change in pedestrian behavior, from nominal behavior to attempting to force a collision: after correctly determining that the pedestrian was acting out-of-distribution, our OOD detector does not falsely identify the behavior of the pedestrian as in-distribution.
    
    \begin{figure}%
        \centering
        \begin{minipage}[t]{0.47\textwidth}
            \vspace{0pt}
            \centering
            \includegraphics[width=\linewidth]{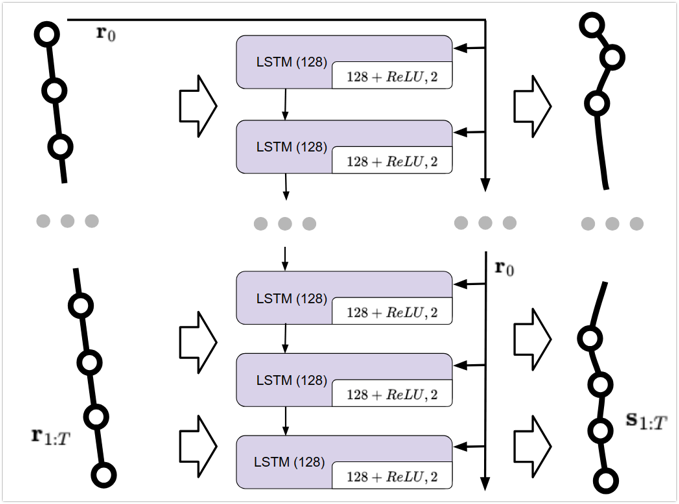}
            \caption{Architecture of the LSTM model used to generate synthetic trajectory data for statistical analysis.}
            \label{fig:synthesizer_architecture}
        \end{minipage}
        \hfill
        \begin{minipage}[t]{0.47\textwidth}
            \vspace{0pt}
            \centering
            \includegraphics[width=\linewidth]{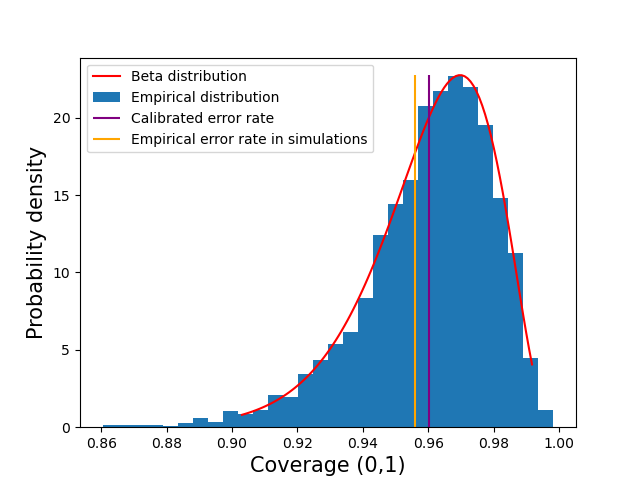}
            \caption{Beta distribution for calibration set of 100 samples and marginal probability of $96\%$, with an empirical coverage rate of $95.6\%$.}
            \label{fig:beta_distribution_empirical}
        \end{minipage}
        
    \end{figure}

    We note that the discrepancy between the observed error rate and the statistical guarantee provided by conformal prediction is expected: more specifically, since each of the trajectories are sampled i.i.d. from the probability distribution $\mathcal{D}$, the distribution of coverage for a threshold calibrated with $C=\rho^{(K)}$ follows a Beta distribution, with shape parameters $(a,b)=(N + 1 - K, K)$ \citep{angelopoulos2021gentle}.
    In Figure~\ref{fig:beta_distribution_empirical}, we show the empirical distribution of coverage, in addition to the associated Beta distribution. We generate the empirical distribution using synthetic trajectories generated by an LSTM (described in Figure~\ref{fig:synthesizer_architecture}), trained on our training dataset, where the trajectories for the calibration and validation datasets are independently sampled without replacement. For each of $3000$ independent trials, $100$ independent trajectories are first generated and used to create a new calibration set of $100$ i.i.d. data points, with a new threshold set equal to the covariance measure associated with ${K=97}$ (the neural networks used to generate the covariance score are the same as during the simulations and were not retrained for this process). Subsequently, the covariance score for each of the data points in a further $150$ independently sampled trajectories is compared to the newly calibrated threshold, with the empirical coverage achieved during each trial being defined as the fraction of the data points in these $150$ trajectories with a covariance score less than or equal to the calibrated threshold. The number of trials that are associated with a given empirical coverage, out of the $3000$ in total, are proportional to the probability density described by the Beta distribution. Figure~\ref{fig:beta_distribution_empirical} shows that the empirical distribution closely matches the Beta distribution.
    
    Note that this implies that different calibration sets sampled from a given dataset drawn from the target probability distribution can result in different levels of coverage in an empirical setting, but in a predictable fashion. In practical terms, this implies that if a given calibration set and ensemble of predictors does not perform as expected with test data or at runtime, it is possible to reshuffle the training and calibration datasets, or to add independent data points from trajectories sampled from $\mathcal{D}$ during runtime to the calibration set, and recalibrate the covariance threshold with the new calibration set. Repeated instances of this process, in a predictable fashion, are increasingly likely to yield an ensemble and threshold that behave with the desired marginal probability characteristics, and theoretical guarantees will remain valid as long as the points used in the calibration set are independent and not included in the training set.

    \subsection{Trajectron++, a Large-Scale Trajectory-Prediction Model}
    \label{app:sim_appendix_trajectron}
    
    The Trajectron++ predictor is multi-modal, with the predicted positions of a given agent (which can be a pedestrian or a vehicle) represented as a Gaussian mixture model (GMM). At each time step, the system produces a set of 25 trajectories for each agent, with each of these trajectories representing a mode and having an assigned probability as part of a categorical distribution (there are several ways that these modes can be sampled from or combined to generate a single predicted trajectory, but this was not used over the course of our experiments). Each of the predicted trajectories corresponding to an individual mode consisted of 12 time steps, with each time step represented by the mean and covariance of a 2D Gaussian distribution.

    The multi-modal nature of a model such as Trajectron++ can be leveraged to detect an increase in uncertainty, most notably through the entropy of the model as a whole or that of its constituent modes. In our case, uncertainty was estimated using the weighted sum of the determinants of the covariance matrices for the first predicted time step in each mode of the model (since the determinant of the covariance matrix has an affine relationship to the entropy of the individual Gaussian distribution):
    
    \begin{align}
        \rho_{t+1} = \sum_{z \in \mathcal{Z}}p_\theta (z|\mathbf{s}) | \Sigma_{t+1}^{(z)} | ,
    \end{align}
    where $z$ is an integer representing a given mode within the model, $p(z|\mathbf{s})$ is the probability of this mode in the categorical distribution at the current time step, $\Sigma_{t+1}^{(z)}$ is the covariance matrix for the Gaussian distribution representing the first predicted time step for the mode and $\mathcal{Z}$ is the set of the integers representing all of the modes in the GMM.
    As these experiments are specifically intended to demonstrate outlier detection with this type of model and take place in cluttered environments with a large variety of interacting agents, the behavior of the ego vehicle follows that described in the original dataset, rather than being guided by MPC. The ego vehicle makes no attempt to dodge pedestrians displaying \textit{insurance fraud} behavior.

    \smallskip
    \noindent\textbf{Dataset.}
    In order to demonstrate the use of conformal prediction to enable the calibration of out-of-distribution detection on a commercial-scale model in a realistic scenario, we evaluate the performance of the detection algorithm in scenarios included in the nuScenes \citep{caesar2020nuscenes} dataset, which is taken from actual recorded vehicle and pedestrian movements in several drives on public roads, sampled at 2Hz. In the dataset, multiple pedestrians and other vehicles are visible simultaneously, and different agents appear in these scenes for different amounts of time. Each scene lasts 20 seconds, with an agent that is visible throughout the scene appearing for 41 time steps.
    Of the 850 annotated scenes available in the nuScenes dataset, which are assumed to be independent, 100 which contained adult pedestrians were removed and placed into a calibration set, with the trajectories used for training and testing drawn from the remaining 750 scenes. After training one instance of the Trajectron++ model, the trained model was used to perform calibration via conformal prediction, for the outlier detection: in each scene in the calibration set, one pedestrian is randomly selected, a time step in the corresponding trajectory is uniformly randomly selected and the uncertainty value is calculated for the prediction at this time step. This gives 100 independent data points used for the calibration procedure.
    Anomalous data was generated by modifying the behavior of adult pedestrians visible for 41 time steps, so that their positions would move, at each time step, towards the current position of the ego vehicle at the same speed as in the \textit{insurance fraud} behavior in the previous experiments (all other data remained unmodified and no collision detection was considered). In total, 230 anomalous trajectories were analyzed in the test procedure, while there were a total of 6230 nominal trajectories of various lengths analyzed.

    We repeated the statistical analysis on trajectories from the nuScenes dataset with Trajectron++, summarized in Table~\ref{tab:confusion_matrix_trajectron}. In Table~\ref{tab:confusion_matrix_trajectron}, the percentage values are computed relative to the ground-truth \emph{nominal} and \emph{OOD} behaviors: nominal cases cover 6230 trajectories of pedestrians in the dataset, while OOD cases use 230 trajectories with shifted pedestrian behavior. The calibrated error rate for individual data points is $\delta \approx 4\%$; a trajectory is classified as OOD when at least one data point in it has a nonconformity measure above the calibrated threshold and nominal otherwise.

    \begin{table}[th]
    	\centering
    	\caption{Confusion matrix showing the performance of the OOD detector through the various modes of the Trajectron++ predictor on trajectories from the nuScenes dataset.}
    	\label{tab:confusion_matrix_trajectron}
    	\begin{adjustbox}{width=0.6\linewidth}
    		{\begin{tabular}{l c c}
    				\toprule
    				True Behavior & Identified as Nominal & Identified as OOD  \\
    				\midrule
    				Nominal & $5361$ ($86.1\%$) & $869$ ($13.9\%$) \\
    				  OOD & $19$ ($8.3\%$) & $211$ ($91.7\%$) \\
    				\bottomrule
    		\end{tabular}}
    	\end{adjustbox}
    \end{table}

    \subsection{Statistical Analysis for Data Collection}
    As mentioned in Section \ref{Simulations}, since each of the trajectories are sampled i.i.d. from the relevant probability distribution, $\mathcal{D}$, the distribution of coverage for a threshold calibrated with conformal prediction follows a Beta distribution; this can be used to inform the data-gathering process when collecting data for a new application. Although the coverage value for an outlier detection threshold calibrated using our conformal method is the same as expected for any valid calibration set, the empirical coverage value attained by an instance of the calibration method is subject to a probability distribution with a shape that depends on the size of the calibration set.
    
    For a valid calibration set with $N$ data points (in sorted order) and a calibration threshold set using the nonconformity measure of the $K$th data point, such that $C :=\rho^{(K)}$, the expected coverage of the empirical calibration threshold is determined by $N$ and $K$. With $\delta_e$ used to represent the empirical equivalent to the ideal false positive detection rate, $\delta$, used to guide the calibration process:
    \begin{align}
        \mathbb{E}[1-\delta_e] = \frac{K}{N+1}.
    \end{align}
    With the value of $K$ set by $N$ and the desired coverage rate such that $K := \lceil(1-\delta)(N+1) \rceil$, the probability of $\delta_e$ resolving close to its ideal value, $\delta$, increases as the value of $N$ increases. Given that the empirical coverage follows a Beta distribution, as described in Theorem \ref{thm:prob_guarantee} and seen in Figure \ref{fig:beta_distribution_empirical}, we can use Beta functions and the corresponding shape parameters $(a,b)=(K,N+1-K)$ to calculate the probability of $\delta_e$ landing within a desired range of values. Specifically, we can use the Beta function, $B(a,b)$, and Incomplete Beta function evaluated at $x$, $I_x(a,b)$, to calculate the probability of $(1-\delta_e)$ resolving between $x_1$ and $x_2$ for a given $N$ and $K$:
    \begin{align}
        P(x_1 \leq 1-\delta_e \leq x_2) = \frac{I_{x_2}(a,b)-I_{x_1}(a,b)}{B(a,b)}.
    \end{align}
    (Note that this is equivalent to subtracting two instances of the cumulative distribution function for the Beta distribution.) For example, for the values $N=1000$ and $K=961$ (corresponding to $(1-\delta) \approx 0.96$), the probability of $(1-\delta_e)$ resolving between $0.95$ and $0.97$ after a single instance of the training and calibration process is approximately $89.65\%$. The procedure for calculating this probability for a calibration dataset containing $N_{test}$ data points is described in Algorithm \ref{alg:calculate_probability}.
    
    \begin{algorithm2e}
    	\KwIn{Dataset Test Size $N_{test}$, Failure Probability $\delta$ and Bounds $x_1$ and $x_2$}
    	\KwOut{Corresponding Probability $P_{test}$}
    	$K_{test} \gets \lceil (1-\delta)(1 + N_{test}) \rceil$
        
        $a \gets K_{test}$
        
        $b \gets N_{test}+1-K_{test}$
    
        $P_{test} \gets (I_{x_2}(a,b)-I_{x_1}(a,b))/B(a,b)$
        
    	\Return $P_{test}$
    \caption{CalculateProbability($N_{test},x_1,x_2,\delta$)} \label{alg:calculate_probability}
    \end{algorithm2e}
    
    This principle can also be used before any data is collected, to estimate the number of calibration data points required to ensure that the empirical coverage value falls within specified bounds around the desired coverage, with a user-specified confidence (i.e., the size of the dataset required for $(1-\delta_e)$ to be within a certain distance of $(1-\delta)$, with a desired probability, after a single iteration of the calibration procedure). A corresponding value of $N$ can be identified using a bisection search method specially adapted for this problem, described in Algorithm \ref{alg:bisection_search_N}.
    
    \begin{algorithm2e} [th]
        \caption{Bisection Search to Estimate Required Calibration Dataset Size}
        \label{alg:bisection_search_N}
    
        \SetKwRepeat{doparallel}{do in parallel}{while}
    
        \KwIn{Failure Probability $\delta$, Desired Probability $P$, Desired Precision $p$ and Bounds $x_1$ and $x_2$}
        \KwOut{Estimated Dataset Size $N$}
    
        Initialize test values $N_{lo}$, $N_{hi}$.
    
        $P_{lo} \gets $CalculateProbability($N_{lo},x_1,x_2,\delta$)
    
        $P_{hi} \gets $CalculateProbability($N_{hi},x_1,x_2,\delta$)
        
        \While{$|N_{hi} - N_{lo}| > p$} {
            
            \uIf{$P_{hi} < P$}{
                $N_{hi} \gets 2N_{hi}$
                
                $P_{hi} \gets $CalculateProbability($N_{hi},x_1,x_2,\delta$)
            }
            \uElseIf{$P_{lo} > P$}{
                $N_{lo} \gets \lceil N_{lo}/2 \rceil$
                
                $P_{lo} \gets $CalculateProbability($N_{lo},x_1,x_2,\delta$)
            }
            \uElse{
                $N_{test} \gets \lceil (N_{lo} + N_{hi})/2 \rceil $
                
                $P_{test} \gets $CalculateProbability($N_{test},x_1,x_2,\delta$)
                
                \uIf{ $P_{test} < P$} {
                $N_{lo} \gets N_{test}$
                
                $P_{lo} \gets P_{test}$
                }
                \uElse{
                $N_{hi} \gets N_{test}$
                
                $P_{hi} \gets P_{test}$
                }
            }
        }
        $N \gets \lceil (N_{lo}+N_{hi})/2\rceil$
        
        \Return $N$
    \end{algorithm2e}
    Note there are some conditions to ensure Algorithm \ref{alg:bisection_search_N} works as intended. The desired coverage value must be between the bounds $x_1,x_2$ such that $0 \leq x_1 < (1-\delta) < x_2 \leq 1$, and $p \geq 1$, otherwise the search may never terminate. (Note it is possible for multiple values of $N$ to provide a valid solution, depending on the search parameters, but when multiple solutions do exist, they are in close proximity to each other.) A convenient starting value for the $N_{lo}$ search parameter is
    \begin{align}
        N_{lo} = \left\lceil \frac{2-\delta}{\delta} \right \rceil,
    \end{align}
    which guarantees that $N_{lo}$ is large enough such that its corresponding $K$ is less than $N+1$ for the requested $\delta$ value, and a convenient starting value for $N_{hi}$ is
    \begin{align}
        N_{hi} = N_{lo} + \left\lceil \frac{2}{1-\delta} \right\rceil,
    \end{align}
    which guarantees that the corresponding $K$ value is different than its equivalent for $N_{lo}$.
\end{document}